\newtheorem{theorem}{Theorem}
\newtheorem{corollary}{Corollary}
\newtheorem{assumption}{Assumption}
\newcommand{\ironman}[1]{\includegraphics[width=#1\textwidth]{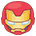}}
\newcommand{\coconut}{\includegraphics[width=0.02\textwidth]{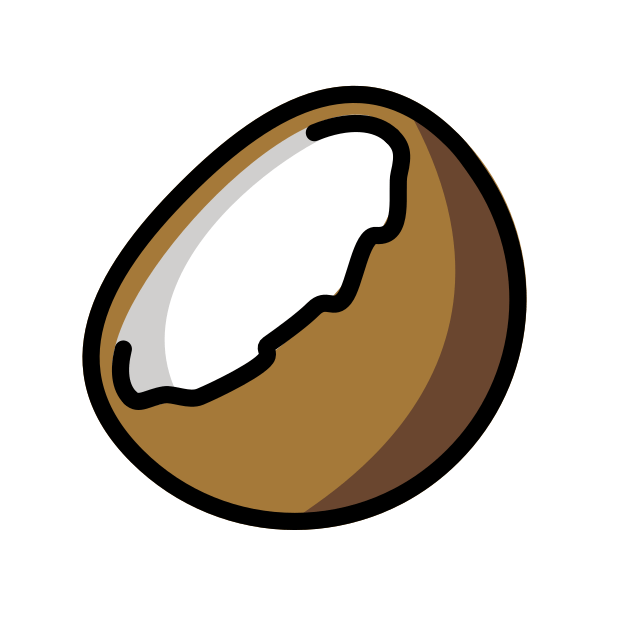}}
\newcommand{\kiwi}{\includegraphics[width=0.02\textwidth]{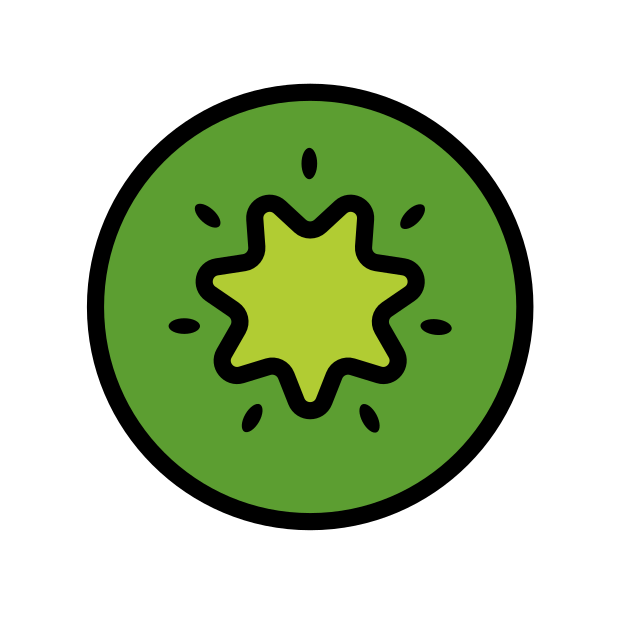}}
\title{Sparse Teachers Can Be Dense with Knowledge}
\author{Yi Yang\textsuperscript{\coconut}\Thanks{\textsuperscript{\coconut}Yi Yang and Chen Zhang contribute equally to this work, and the order is determined alphabetically.}, Chen Zhang\textsuperscript{\coconut}, Dawei Song\textsuperscript{\kiwi}\Thanks{\textsuperscript{\kiwi}Dawei Song is the corresponding author, who is also with The Open University, UK.} \\
Beijing Institute of Technology \\
\texttt{\{yang.yi,czhang,dwsong\}@bit.edu.cn} \\}
\def\thanks#1{\protected@xdef\@thanks{\@thanks
    \protect\footnotetext{#1}}}
\begin{document}
\maketitle
\begin{abstract}
Recent advances in distilling pretrained language models have discovered that, besides the expressiveness of knowledge, the student-friendliness should be taken into consideration to realize a truly knowledgeable teacher. Based on a pilot study, we find that over-parameterized teachers can produce expressive yet student-unfriendly knowledge and are thus limited in overall knowledgeableness. To remove the parameters that result in student-unfriendliness, we propose a sparse teacher trick under the guidance of an overall knowledgeable score for each teacher parameter. The knowledgeable score is essentially an interpolation of the expressiveness and student-friendliness scores. The aim is to ensure that the expressive parameters are retained while the student-unfriendly ones are removed. Extensive experiments on the GLUE benchmark show that the proposed sparse teachers can be dense with knowledge and lead to students with compelling performance in comparison with a series of competitive baselines.\footnote{Code is available at \url{https://github.com/GeneZC/StarK}.}
\end{abstract}

\section{Introduction}

Pretrained language models (LMs) built upon transformers~\citep{DevlinCLT19,Liu19,Raffel20} have achieved great successes. However, the appealing performance is usually accompanied with expensive computational costs and memory footprints, which can be alleviated by model compression~\citep{Ganesh21}. Knowledge distillation~\citep{Hinton15}, as a dominant method in model compression, concentrates on transferring knowledge from a teacher of large scale to a student of smaller scale. 

\begin{figure}[t]
\centering
\includegraphics[width=0.37\textwidth]{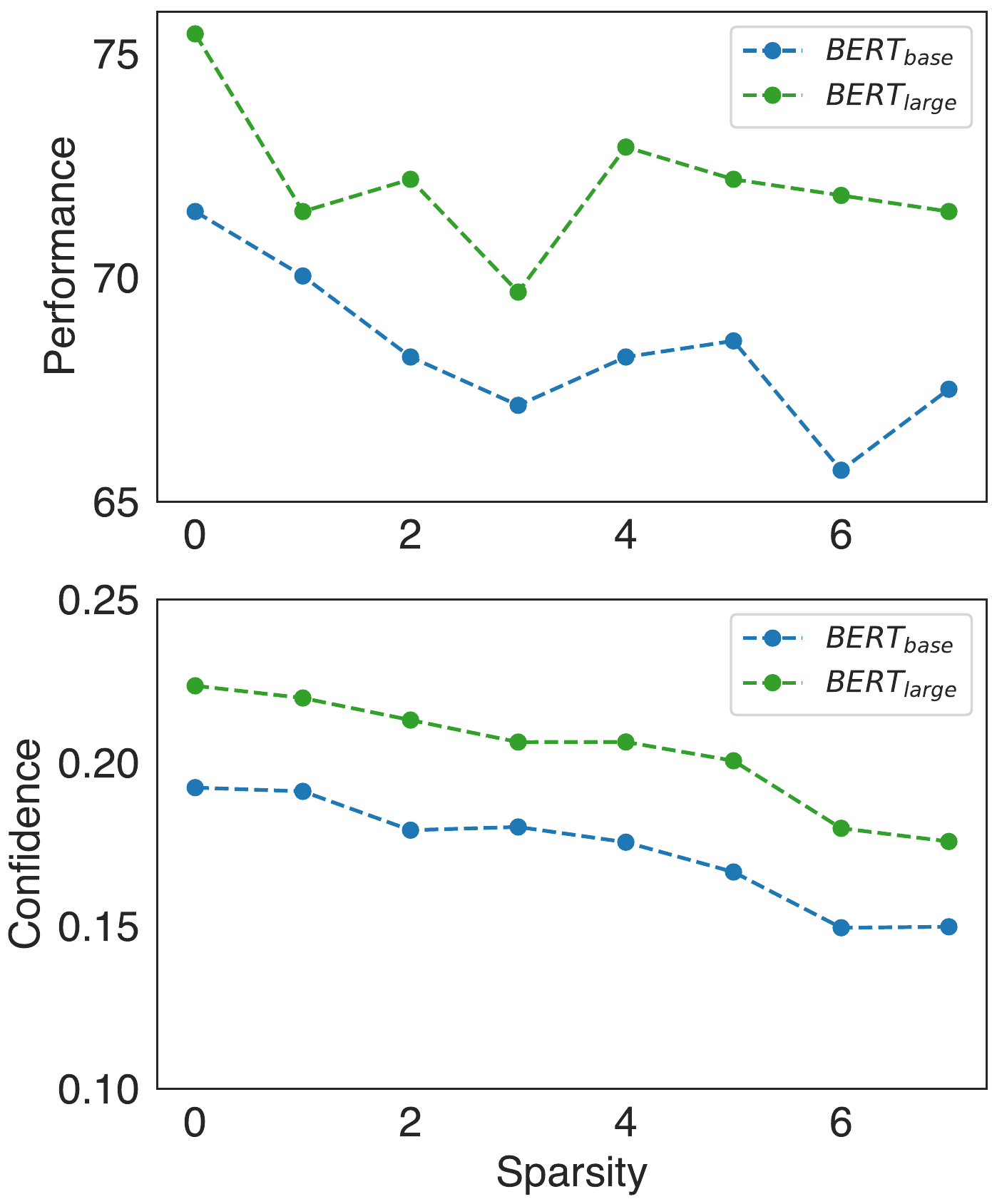}
\caption{Performance and confidence on RTE~\citep{BentivogliMDDG09} of BERT\textsubscript{base} and BERT\textsubscript{large} at small sparsity levels. Task metric and output distribution variance are used as the measures of performance and confidence, respectively. Distribution variance is comparatively equivalent to distribution negative entropy as employed in~\citet{PereyraTCKH17}. Proof of the equivalence can be found in Appendix~\ref{appendix1}.}
\label{fig1}
\end{figure}

Conventional studies~\citep{Sun19,Jiao20} mainly expect that the expressive knowledge would be well transferred, yet largely neglecting the existence of student-unfriendly knowledge. Recent attempts~\citep{Zhou22,Zhao22} are made to adapt the teacher to more student-friendly knowledge and have yielded performance gains. Based on these observations, we posit that over-parameterized LMs, on the one hand, can produce expressive knowledge due to over-parameterization, but on the other hand can also produce student-unfriendly knowledge due to over-confidence~\citep{Hinton15,PereyraTCKH17}. From a pilot study shown in Figure~\ref{fig1}, we find that LMs of large scale tend to have a good performance and high confidence, and that both performance and confidence can be degraded through randomly sparsifying a small portion of parameters.\footnote{\url{https://pytorch.org/docs/stable/generated/torch.nn.utils.prune.random_unstructured}} This indicates that some parameters resulting in student-unfriendliness can be rather removed, to improve student-friendliness of the teacher without sacrificing too much its expressiveness.

Motivated by this finding, we propose a sparse teacher trick (in short, \textsc{StarK} \ironman{0.02}) under the guidance of an overall knowledgeable score for each teacher parameter, which accords not only with the expressiveness but also the student-friendliness of the parameter by interpolation. The aim is to retain the expressive parameters while removing the student-unfriendly ones. Specifically, we introduce a three-stage procedure consisting of 1) \textit{trial distillation}, 2) \textit{parameter sparsification}, and 3) \textit{actual distillation}. The \textit{trial distillation} distills the dense teacher to the student so that a trial student is obtained. The \textit{parameter sparsification} first estimates the expressiveness score and student-friendliness score of each teacher parameter via feedbacks respectively from the teacher itself and the trial student, and then sparsifies the teacher by removing the parameters associated with adequately low interpolated knowledgeable scores. The \textit{actual distillation} distills the sparsified teacher to the student so that an actual student is obtained, where the student is initialized in the same manner as that used in \textit{trial distillation} following the commonly-used rewinding technique~\citep{FrankleC19}. 


We conduct an extensive set of experiments on the GLUE benchmark. Experimental results demonstrate that the sparse teachers can be dense with knowledge and lead to a remarkable performance of students compared with a series of competitive baselines.

\section{Background}

\subsection{BERT Architecture}

The BERT~\citep{DevlinCLT19} is composed of several stacked encoder layers of transformers~\citep{Vaswani17}. There are two blocks in every encoder layer: a multi-head self-attention block (MHA) and a feed-forward network block (FFN), with a residual connection and a normalization layer around each.

Given an $l$-length sequence of $d$-dimensional input vectors $\mathbf{X}\in\mathbb{R}^{l\times d}$, the output of the MHA block with $A$ independent heads can be represented as:
\begin{equation}\nonumber
\begin{aligned}
&{\rm MHA}(\mathbf{X})\\
&=\sum_{i=1}^{A}{\rm Attn}(\mathbf{X},\mathbf{W}_{Q}^{(i)},\mathbf{W}_{K}^{(i)},\mathbf{W}_{V}^{(i)})\mathbf{W}_O^{(i)},
\end{aligned}
\end{equation}
where the $i$-th head is parameterized by $\mathbf{W}_{Q}^{(i)}$, $\mathbf{W}_{K}^{(i)}$, $\mathbf{W}_{V}^{(i)}\in\mathbb{R}^{d\times d_{A}}$, and $\mathbf{W}_O^{(i)}\in\mathbb{R}^{d_{A}\times d}$. 
On the other hand, the output of the FFN block is:
\begin{equation}\nonumber
{\rm FFN}(\mathbf{X})= {\rm GELU}(\mathbf{X}\mathbf{W}_1)\mathbf{W}_2,
\end{equation}
where two fully-connected layers are parameterized by $\mathbf{W}_1\in\mathbb{R}^{d\times d_{I}}$ and $\mathbf{W}_2\in\mathbb{R}^{d_{I}\times d}$ respectively.

\subsection{Knowledge Distillation}

Knowledge distillation~\citep{Hinton15} aims to transfer the knowledge from a large-scale teacher to a smaller-scale student, which is originally proposed to supervise the student with the teacher logits. With its prevalence, a tremendous amount of work has been investigated to transfer various knowledge from the teacher to the student~\citep{Romero15,ZagoruykoK17,Sun19,Jiao20,Park21,LiLZXYJ20,WangW0B0020}. PKD~\citep{Sun19} introduces a patient distillation scheme where the student learns multiple intermediate layer representations and logits from the teacher. Moreover, attention distributions~\citep{Sun20,Jiao20,LiLZXYJ20,WangW0B0020} and even high-order relations~\citep{Park21} are considered to further boost the performance. 

Since a large capacity gap between the teacher and the student can lead to an inferior distillation quality, TAKD~\citep{Mirzadeh20} proposes to insert teacher assistants of possible intermediate scales between the teacher and the student so that the gap is drawn closer~\citep{zhang22}. More recently, teachers with student-friendly architectures have exactly showed the significance of student-friendliness~\citep{ParkCJKH21}. MetaKD~\citep{Zhou22} adopts meta-learning to optimize the student-friendliness of the teacher according to the student preference. DKD~\citep{Zhao22} decouples and amplifies student-friendly knowledge in contrast to others. Distinguished from these student-friendly teachers that are achieved by altering teacher scales, architectures, parameters or knowledge representations, our work, to our best knowledge, is the first one suggesting that teacher parameters can produce both student-friendly and student-unfriendly knowledge and aiming to find the sparse teacher with the best student-friendliness.

\begin{figure*}[ht]
\centering
\includegraphics[width=0.97\textwidth]{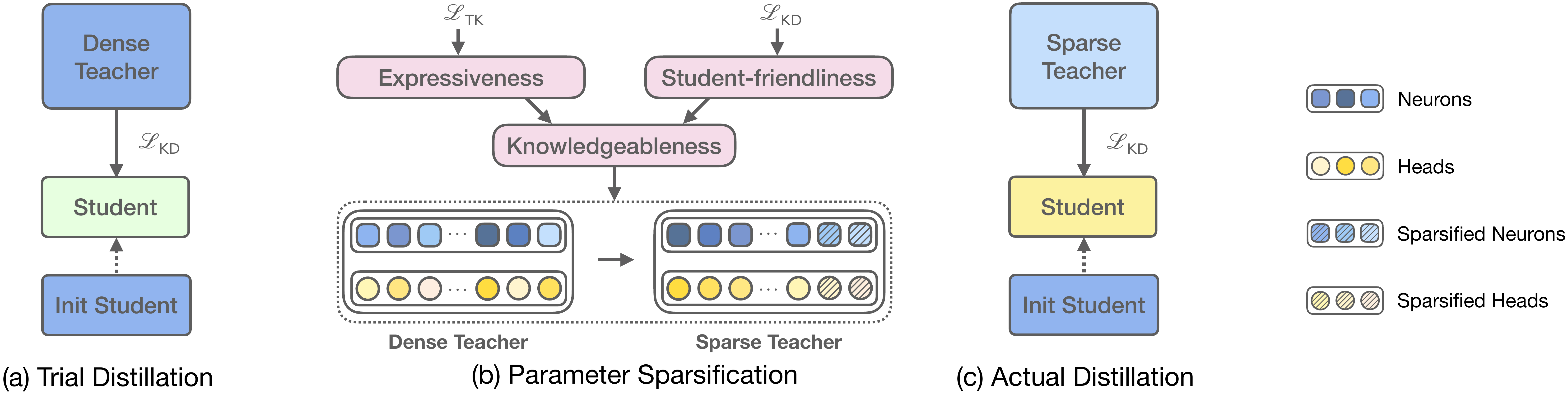}
\caption{The overview of \textsc{StarK}. \textit{trial distillation} distils a trial student from the dense teacher on a specific task. \textit{parameter sparsification} sparsifies the parameters of the dense teacher that are associated with adequately low knowledgeable scores. \textit{actual distillation} rewinds the \textit{trial distillation} by replacing the dense teacher with the proposed sparse teacher.}
\label{fig2}
\end{figure*}

\subsection{Model Pruning}

Model pruning is imposed to remove the less expressive parameters for model compression. Previous work applies either structured~\citep{Li17,LuoL17,He17,YangZWS22} or unstructured pruning~\citep{Han15,Park17,Louizos18,Lee19} to transformers. Unstructured pruning focuses on pruning parameter-level parameters based on zero-order decisions derived from magnitudes~\citep{Gordon20} or first-order decisions computed from both gradients and magnitudes~\citep{Sanh20}. In contrast, structured pruning prunes module-level parameters such like MHA heads~\citep{Michel19} and FFN layers~\citep{Prasanna20} guided by the expressive score~\citep{Michel19}. It is noteworthy that while some pruning methods leverage post-training pruning~\cite{Hou20}, others can take advantage of training-time pruning~\cite{Xia22}. Although training-time pruning can result in slightly better performance, it can consume much more time to meet a convergence. Our work mainly exploits structured pruning to obtain sparse teachers, yet also explores the use of unstructured pruning, in a post-training style.

\section{Sparse Teacher Trick}

Our trick involves three stages in the student learning procedure as shown in Figure~\ref{fig2}. First, we distil a trial student from the dense teacher on a specific task (\textit{trial distillation}). Then, we sparsify the parameters of the dense teacher that are associated with adequately low knowledgeable scores (\textit{parameter sparsification}). Finally, rewinding is applied, where the student is set to the initialization exactly used in the \textit{trial distillation} stage and is learned from the sparse teacher during (\textit{actual distillation}).

\subsection{Trial and Actual Distillations}

\textit{Trial distillation} and \textit{actual distillation} share the same distillation regime. We employ the widely-used logits distillation~\citep{Hinton15} as the distillation objective, as depicted below:
\begin{equation}\nonumber
\begin{aligned}
&\mathcal{L}_{\sf KD}=-{\rm softmax}(\mathbf{z}^{t}/\tau)\log{\rm softmax}(\mathbf{z}^{s}/\tau),\\
&\mathcal{L}_{\sf TK}=-\mathbf{y}\log\mathbf{y}^{s},\\
&\mathcal{L}=\mathcal{L}_{\sf KD}+\alpha\cdot\mathcal{L}_{\sf TK},
\end{aligned}
\end{equation}
where $\mathbf{z}^{t}$, $\mathbf{z}^{s}$ separately stand for logits of the teacher and student, and $\mathbf{y}^{s}$, $\mathbf{y}$ separately stand for prediction normalized probabilities of the student and ground-truth one-hot probabilities. Two subscripts ${\sf KD}$ and ${\sf TK}$ indicate distillation and task losses respectively. $\tau$ is a temperature controlling the smoothness the logits~\citep{Hinton15}, and $\alpha$ is a term balancing two losses. 

The \textit{trial distillation} and \textit{actual distillation} also reuse the initialization of the student for better convergence, which is known as rewinding technique~\citep{FrankleC19}. 

\subsection{Parameter Sparsification}

For \textit{parameter sparsification}, we design a knowledgeable score, which is essentially an interpolation of the already-proposed expressive score~\citep{MolchanovTKAK17} and our proposed student-friendly score, to measure the knowledgeableness of each teacher parameter. Thanks to the knowledgeable score, we can safely exclude student-unfriendly parameters without harming expressive parameters too much.

We mainly sparsify the attention heads of MHA blocks and intermediate neurons of FFN blocks in the teacher. Following the literature on structured pruning in a post-training style~\citep{Michel19,Hou20}, we attach a set of variables $\xi^{(i)}$ and $\nu$ to the attention heads and the intermediate neurons, to record the parameter sensitivities for a specific task through accumulated absolute gradients, as shown below:
\begin{equation}\nonumber
\begin{aligned}
&{\rm MHA}^{\circ}(\mathbf{X})\\
&=\sum_{i=1}^{A}\xi^{(i)}{\rm Attn}(\mathbf{X},\mathbf{W}_{Q}^{(i)},\mathbf{W}_{K}^{(i)},\mathbf{W}_{V}^{(i)})\mathbf{W}_O^{(i)},\\
&{\rm FFN}^{\circ}(\mathbf{X})= {\rm GELU}(\mathbf{X}\mathbf{W}_1){\rm diag}(\nu)\mathbf{W}_2,
\end{aligned}
\end{equation}
where $\xi^{(i)}\equiv 1$ and $\nu\equiv\mathbf{1}^{d_{I}}$. We set the values of the $\xi^{(i)}$ and $\nu$ to ones to ensure the functionalities of corresponding heads and neurons are retained. 

The implementation is mathematically equivalent to the prevalent first-order taylor expansion of the absolute variation between before and after removing a module (i.e., a head or a neuron) akin to~\citet{MolchanovTKAK17}. Take the $i$-th attention head as an example, its parameter sensitivity can be written as:
\begin{equation}\nonumber
\begin{aligned}
&\left|\frac{\partial\mathcal{L}}{\partial\xi^{(i)}}\right|=\left|\frac{\partial\mathcal{L}}{\partial\xi^{(i)}\mathbf{O}^{(i)}}\frac{\partial\xi^{(i)}\mathbf{O}^{(i)}}{\partial\xi^{(i)}}\right|=\left|\frac{\partial\mathcal{L}}{\partial\mathbf{O}^{(i)}}\mathbf{O}^{(i)}\right|\\
&\approx\left|(\mathcal{L}_{\mathbf{0}}+\frac{\partial\mathcal{L}}{\partial\mathbf{O}^{(i)}}(\mathbf{O}^{(i)}-\mathbf{0})+\mathbf{r})-\mathcal{L}_{\mathbf{0}}\right|\\
&=\left|\mathcal{L}-\mathcal{L}_{\mathbf{0}}\right|,
\end{aligned}
\end{equation}
where $\mathcal{L}$ stands for an arbitrary objective with abuse of notation, and $\mathbf{O}^{(i)}$ is utilized for $i$-th attention head output. $\mathcal{L}_{\mathbf{0}}$ actually means $\mathcal{L}|_{\mathbf{O}^{(i)}=\mathbf{0}}$, and $\mathbf{r}$ represents residuals in taylor expansion.

Note that our trick can be flexibly extended to a training-time style~\citep{Xia22} or unstructured pruning, which will be discussed in our experiments. 

\paragraph{Expressiveness.} The expressiveness of the teacher is tied to the expressiveness score. A higher expressiveness score indicates that the corresponding parameter has bigger contribution towards the performance. Concretely, the expressiveness scores of the attention heads in MHA and the intermediate neurons in FFN can be depicted as:
\begin{equation}\nonumber
\begin{aligned}
\mathbb{P}_{\sf head}^{(i)}&=\mathbb{E}_{\mathcal{D}}\left|\frac{\partial\mathcal{L}_{\sf TK}}{\partial\xi^{(i)}}\right|,\\
\mathbb{P}_{\sf neuron}&=\mathbb{E}_{\mathcal{D}}\left|\frac{\partial\mathcal{L}_{\sf TK}}{\partial {\rm diag}(\nu)}\right|,
\end{aligned}
\end{equation}
where $\mathcal{D}$ is a data distribution, and $\mathcal{L}_{\sf TK}$ is the task loss of the teacher. $\mathbb{E}$ represents expectation.

\paragraph{Student-friendliness.} Likewise, the student-friendliness of the teacher can be described as student-friendliness scores, which are approximated from distillation loss of the trial distillation. 
\begin{equation}\nonumber
\begin{aligned}
\mathbb{Q}_{\sf head}^{(i)}&=\mathbb{E}_{\mathcal{D}}\left|\frac{\partial\mathcal{L}_{\sf KD}}{\partial\xi^{(i)}}\right|,\\
\mathbb{Q}_{\sf neuron}&=\mathbb{E}_{\mathcal{D}}\left|\frac{\partial\mathcal{L}_{\sf KD}}{\partial {\rm diag}(\nu)}\right|,
\end{aligned}
\end{equation}
where $\mathcal{L}_{\sf KD}$ is the distillation loss as computed with the trial student from the \textit{trial distillation}. Accordingly, the higher the student-friendliness score is, the more friendliness the teacher offers.

Referring to~\citet{MolchanovTKAK17}, we normalize the expressiveness and student-friendliness scores with $\ell_{2}$ norm. In view that the teacher needs to balance the expressiveness and student-friendliness, we introduce a coefficient $\lambda$ to quantify the tradeoff. Therefore, the knowledgeable score can be written in an interpolated form:
\begin{equation}\nonumber
\begin{aligned}
\mathbb{I}_{\sf head}^{(i)}&= \lambda\cdot\mathbb{P}_{\sf head}^{(i)}+(1-\lambda)\cdot\mathbb{Q}_{\sf head}^{(i)},\\
\mathbb{I}_{\sf neuron}&= \lambda\cdot\mathbb{P}_{\sf neuron}+(1-\lambda)\cdot\mathbb{Q}_{\sf neuron},
\end{aligned}
\end{equation}

\textit{Parameter sparsification} sparsifies the parameters in the teacher with adequately low knowledgeable scores. The adequacy is met by enumerating diverse sparsity levels and obtaining the one leading to the best student during the \textit{actual distillation}.

\section{Experiments}

\subsection{Data \& Metrics}

\begin{table*}[ht]
\centering
\resizebox{0.77\textwidth}{!}{
\begin{tabular}{lrrcc}
\toprule
\textbf{Dataset} & \textbf{\#Train exam.} & \textbf{\#Dev exam.} & \textbf{Max. length} & \textbf{Metric} \\
\midrule
SST-2 &  67K & 0.9K & 64 & Accuracy \\
MRPC & 3.7K & 0.4K & 128 & F1 \\
STS-B & 7K & 1.5K & 128 & Spearman Correlation \\
QQP & 364K & 40K & 128 & F1 \\
MNLI-m/mm & 393K & 20K & 128 & Accuracy \\
QNLI & 105K & 5.5K & 128 & Accuracy \\
RTE & 2.5K & 0.3K & 128 & Accuracy \\
\bottomrule
\end{tabular}}
\caption{The statistics, maximum sequence lengths, and metrics of the GLUE benchmark.}
\label{tab1}
\end{table*}

We evaluate our approach on GLUE benchmark~\citep{Wang19} that contains a collection of NLU tasks, including CoLA~\citep{Warstadt19} for linguistic acceptability, SST-2~\citep{SocherPWCMNP13} for sentiment analysis, MRPC~\citep{DolanB05}, QQP\footnote{\url{https://data.quora.com/First-Quora-Dataset-Release-Question-Pairs}} and STS-B~\citep{CerDALS17} for paraphrase similarity matching, MNLI~\citep{WilliamsNB18}, QNLI~\citep{RajpurkarZLL16} and RTE~\citep{DaganGM05,haim2006,GiampiccoloMDD07,BentivogliMDDG09} for natural language inference. Note that we exclude CoLA~\citep{Warstadt19} on which general knowledge distillation methods transfer knowledge poorly~\citep{Xia22}.

Accuracy is adopted as the evaluation metric for MNLI-m, MNLI-mm, QNLI, RTE and SST-2, and F1-score is used for MRPC, QQP. The Spearman correlation is used for STS-B. We also report the Average results on development sets of all datasets. We display the statistics of GLUE in Table~\ref{tab1}.

\subsection{Implementation \& Baselines}

We conduct experiments on an Nvidia V100. AdamW~\citep{LoshchilovH19} is applied as the optimizer. We search the learning rate within \{1, 2, 3\}$\times$10\textsuperscript{-5} and the batch size within \{16, 32\}. All training procedures are carried out within 10 epochs, with an early-stopping. We empirically find that, when temperature $\tau$ is 2.0 and distillation balance $\alpha$ is 1.0, reasonable performance is attained. The optimal sparsity is searched within \{10\%, 20\%, 30\%, 40\%, 50\%, 60\%, 70\%, 80\%, 90\%\}. Knowledgeableness tradeoff $\lambda$ is set to 0.5 for acceptable performance and its impact on the performance will be discussed later.

We finetune the original BERT\footnote{\url{https://github.com/google-research/bert}} as the teacher and distil it to the student of a smaller scale initialized by dropping 2/3 layers or pruning 70\% parameters (with above-mentioned expressiveness pruning) of the teacher, which is initialized from the teacher. We first directly finetune the student as a solid baseline (FT). Then we compare our method to conventional baselines, such as KD~\cite{Hinton15}, PKD~\cite{Sun19}, CKD~\cite{Park21}, and DynaBERT~\cite{Hou20}. Further, we compare our method to student-friendly baselines, including TAKD that employs a reasonable assistant~\citep{Mirzadeh20}, MetaKD~\citep{Zhou22} that adapts the teacher with the student feedback, and DKD~\citep{Zhao22} that amplifies the student-friendly knowledge.

\subsection{Main Comparison}

\begin{table*}[ht]
\centering
\resizebox{0.99\textwidth}{!}{
\begin{tabular}{lccccccccc}
\toprule
\textbf{Method} & {\makecell[c]{\textbf{MNLI-m} \\\textbf{Acc}}} & {\makecell[c]{\textbf{MNLI-mm} \\\textbf{Acc}}} & {\makecell[c]{\textbf{MRPC} \\\textbf{F1}}} &{\makecell[c]{\textbf{QNLI} \\\textbf{Acc}}} &  {\makecell[c]{\textbf{QQP} \\\textbf{F1}}} &  {\makecell[c]{\textbf{RTE} \\\textbf{Acc}}} & {\makecell[c]{\textbf{STSB} \\\textbf{SpCorr}}} &  {\makecell[c]{\textbf{SST-2} \\\textbf{Acc}}} & \textbf{Average} \\ 
\midrule
BERT\textsubscript{base} & 84.9  & 84.9   & 91.2  & 91.7 & 88.4  &71.5  &88.3 & 93.8 & 86.8 \\
\midrule
\multicolumn{10}{c}{\textit{layer-dropped student}} \\
\midrule
FT\textsubscript{4} &   77.5  &   77.7 &   86.0 &   85.3 &   86.1 &   65.0 &   86.5 &   89.5 &   81.7 \\
KD\textsubscript{4} &   77.7 &   77.7 &   86.9 &   85.1 &   86.1 &   65.3 &   86.4 &   89.6 &   81.8 \\
PKD\textsubscript{4} &   77.7   &   77.7    &   \textbf{87.6}  &   85.0 &   86.0  &   65.3 &   86.4  &   89.9   &   82.0 \\
CKD\textsubscript{4} &   77.7 &   77.9  &   87.2  &   85.0 &   86.2  &   64.6 &   86.4  &   89.6 &   81.8 \\
\midrule
MetaKD\textsubscript{4} & \textbackslash{} & \textbackslash{}  &   85.1 & \textbackslash{} & \textbackslash{} &   63.9 &   86.5 &   89.5 &   \textbackslash{} \\
DKD\textsubscript{4} &   77.9 &   78.0 &  86.9 &   84.8 &   86.0 &   66.3 &   86.5 &   88.8 &   81.9 \\
TAKD\textsubscript{4} &   77.1 &   77.3 &   87.2 &   84.5 &   86.3 &   \textbf{67.9} &   86.7 &   89.9 &   82.1 \\
\rowcolor{green!20} \textsc{StarK}\textsubscript{4} & \textbf{78.8} & \textbf{79.0} & 87.4 & \textbf{85.7}  & \textbf{86.5} & 67.5  & \textbf{87.2} & \textbf{90.6} & \textbf{82.8} \\
\rowcolor{green!20} \quad \S & 40\% & 50\% & 50\% & 50\% & 30\% & 60\% & 40\% & 50\%  & 46\% \\
\midrule
\multicolumn{10}{c}{\textit{parameter-pruned student}} \\
\midrule
FT\textsubscript{30\%} & 82.0 & 82.6 & 88.5 & 89.5 & 87.7 & 69.0 & 87.2 & 91.9 & 84.8 \\
KD\textsubscript{30\%} & 82.5 & 82.4 & 89.1 & 89.5 & 87.8 & 69.3 & 87.0 & 91.9 & 84.9 \\
PKD\textsubscript{30\%} & 82.5 & 82.8 & \textbf{89.5} & 89.9 & \textbf{88.0} & 68.6 & 86.4 & 91.9 & 84.9 \\
DynaBERT\textsubscript{30\%} & 81.5 & 82.8 & 87.4 & 89.1 & 86.6 & 68.1 & 87.2 & 90.3 & 84.1 \\
\midrule
DKD\textsubscript{30\%} & 82.4 & 82.4 & 88.4 & 89.6 & 87.7 & \textbf{70.4} & 87.0 & 91.9 & 85.0 \\
TAKD\textsubscript{30\%} & 82.7 & 82.3 & 89.1 & 89.8 & 87.8 & 68.6 & 87.6 & 91.9 & 85.0 \\
\rowcolor{green!20} \textsc{StarK}\textsubscript{30\%} & \textbf{82.8} & \textbf{82.9} & 89.4 & \textbf{90.0} & 87.8 & 69.7 & \textbf{87.9} & \textbf{92.2} & \textbf{85.3} \\
\rowcolor{green!20} \quad \S & 30\% & 20\% & 30\% & 70\% & 40\% & 20\% & 30\% & 40\%  & 35\% \\
\bottomrule
\end{tabular}}
\caption{The results of main comparison on GLUE development set. The best results on datasets are \textbf{boldfaced}. \S~is the optimal sparsity on each dataset. *\textsubscript{4} and *\textsubscript{30\%} mean the student is initialized by dropping 2/3 layers or pruning 70\% parameters of the teacher. \textsc{StarK}\textsubscript{4} and \textsc{StarK}\textsubscript{30\%} exactly mean KD\textsubscript{4} and KD\textsubscript{30\%} w/ \textsc{StarK}. We only report MetaKD on small datasets due to limited resources, and DynaBERT without data augmentation due to unavailable augmented data.}
\label{tab2}
\end{table*}

Table~\ref{tab2} shows the main experimental results. We can observe that \textsc{StarK} has a significant performance gain by comparing \textsc{StarK} with the original KD.
Numerically, the absolute improvements brought by \textsc{StarK} are 1.0\% and 0.4\% in term of Average. This result implies that sparse teachers can be dense with knowledge. On another note, this possibly indicates a good teacher should be a modest one.
Moreover, \textsc{StarK} achieves 0.7\% and 0.3\% absolute improvements when compared to the competitive TAKD, illustrating that sparse teachers can be more expressive and student-friendly, thereby more knowledgeable to the student than teacher assistants. It seems that student-friendly baselines can only realize a comparable performance to the conventional baselines. We argue this is not the case when student-friendly baselines, say DKD, are armed with advanced distillation objectives, say PKD. Also note that the performances of MetaKD and DynaBERT are lower than those originally reported, as the original work either initialized the student from a pretrained LM of the same scale or utilized extra augmented data. 

\subsection{Analyses}

\paragraph{Knowledgeableness Tradeoff}

To investigate the impact of the tradeoff between expressiveness and student-friendliness, we conduct more experiments by varying $\lambda$ values. Figure~\ref{fig3} illustrates the performance variation along with the change of $\lambda$. The performance generally exhibits a concave curvature, which hints that the sparsification of the teacher does face with a tradeoff between expressiveness and student-friendliness, and an ideal $\lambda$ should be not too large or too small.

\begin{figure}[ht]
\centering
\includegraphics[width=0.43\textwidth]{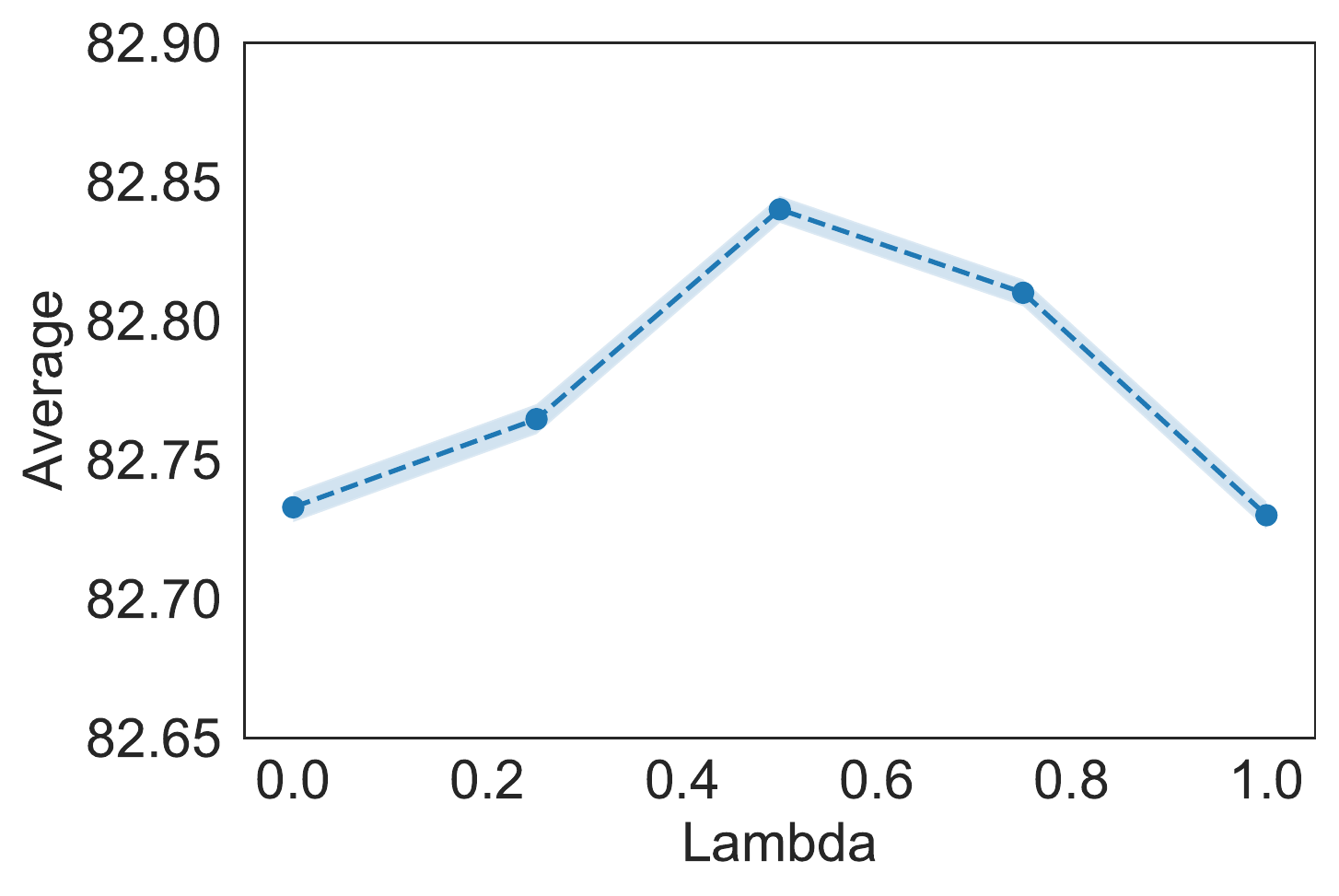}
\caption{Performance of \textsc{StarK}\textsubscript{4} with different $\lambda$.}
\label{fig3}
\end{figure}

\paragraph{Scalability}

\begin{table*}[ht]
\centering
\resizebox{0.97\textwidth}{!}{
\begin{tabular}{lcccccccccc}
\toprule
\textbf{Method} & {\makecell[c]{\textbf{MNLI-m} \\\textbf{Acc}}} & {\makecell[c]{\textbf{MNLI-mm} \\\textbf{Acc}}} & {\makecell[c]{\textbf{MRPC} \\\textbf{F1}}} &{\makecell[c]{\textbf{QNLI} \\\textbf{Acc}}} &  {\makecell[c]{\textbf{QQP} \\\textbf{Acc}}} &  {\makecell[c]{\textbf{RTE} \\\textbf{Acc}}} & {\makecell[c]{\textbf{STSB} \\\textbf{SpCorr}}} &  {\makecell[c]{\textbf{SST-2} \\\textbf{Acc}}} & \textbf{Average} \\
\midrule
BERT\textsubscript{large} & 86.6	& 86.1	& 92.3	& 92.2	& 89.0	& 75.5	& 89.9	& 93.9	& 88.2 \\
\midrule
KD\textsubscript{8} &   78.9   &   79.5    &   84.9  &   86.1 &   86.4  &   63.9 &   85.6  &   90.5   &   82.0 \\
\textsc{StarK}\textsubscript{8} & \textbf{79.4}    & \textbf{80.5}     & \textbf{85.0} & \textbf{86.3}  &   \textbf{87.0}  &   \textbf{65.7} & \textbf{88.7}   & \textbf{90.9}    & \textbf{82.9} \\ 
\quad \S & 30\% & 20\% & 90\% & 10\% & 30\% & 60\% & 20\% & 20\% & 35\% \\
\midrule
BERT\textsubscript{base} & 84.9  & 84.9   & 91.2  & 91.7 & 88.4  &71.5  &88.3 & 93.8 & 86.8 \\
\midrule
KD\textsubscript{2} &   73.2   &   72.8    &   82.9  &   78.9 &   83.5  &   \textbf{58.5} &   46.5  &   86.8   &   72.9 \\
\textsc{StarK}\textsubscript{2} & \textbf{73.9}    & \textbf{74.3}     & \textbf{83.1} & \textbf{80.4}  &   \textbf{83.8}  &   57.8 & \textbf{48.6}   & \textbf{88.1}    & \textbf{73.7} \\ 
\quad \S & 50\% & 50\% & 30\% & 50\% & 30\% & 40\% & 30\% & 40\% & 40\% \\
\bottomrule
\end{tabular}}
\caption{The results of scalability to larger teachers and smaller students.}
\label{tab3}
\end{table*}

To examine the scalability of \textsc{StarK} to larger teachers (i.e., BERT\textsubscript{large}) and smaller students (i.e., *\textsubscript{2}), where distillation methods can in fact suffer more severely from student-unfriendliness, we distil from BERT\textsubscript{large} to an eight-layer student with KD and \textsc{StarK}, and also distill from BERT\textsubscript{base} to an two-layer student. The results shown in Table~\ref{tab3} suggest that \textsc{StarK} works well on large teachers and smaller students, and the capacity gap between large teachers and small students can be drawn closer by selecting a sparse teacher. However, the eight-layer student distilled from BERT\textsubscript{large} performs only slightly better than the four-layer student distilled from BERT\textsubscript{base} even with \textsc{StarK} (see Table~\ref{tab2}).  With ~1/3 parameters,  \textsc{StarK}\textsubscript{4} can achieve 95\% performance of BERT\textsubscript{base}, and such 95\%/33\% scale-performance tradeoff is acceptable in real-world applications. In contrast, the two-layer student can only get a 85\%/17\% tradeoff, limiting its practical usage.

\begin{table}[ht]
\centering
\resizebox{0.43\textwidth}{!}{
\begin{tabular}{cc}
\toprule
\textbf{Stage} & \textbf{Train time on MNLI} \\
\midrule
\textit{trial distillation} & $\sim$2.5h \\
\textit{actual distillation} & $\sim$7h \\
\bottomrule
\end{tabular}}
\caption{The training time consumed during \textit{trial distillation} and \textit{actual distillation} stages.}
\label{tab7}
\end{table}

\paragraph{Training Efficiency}

\textsc{StarK} indeed requires more training time compared to KD due to the exhaustive search during the \textit{actual distillation} stage. However, it dose not introduce heavy compute since the search mainly involves additional distillations with sparsified teachers that are smaller than the original teacher. Table~\ref{tab7} indicates that \textit{actual distillation} consumes not that much more training time than \textit{trial distillation}. Hence, we believe the tradeoff between training time and student performance, along with training efficiency, is acceptable.

\paragraph{Pluggability}

\begin{table*}[ht]
\centering
\resizebox{0.99\textwidth}{!}{
\begin{tabular}{lccccccccc}
\toprule
\textbf{Method} & {\makecell[c]{\textbf{MNLI-m} \\\textbf{Acc}}} & {\makecell[c]{\textbf{MNLI-mm} \\\textbf{Acc}}} & {\makecell[c]{\textbf{MRPC} \\\textbf{F1}}} &{\makecell[c]{\textbf{QNLI} \\\textbf{Acc}}} &  {\makecell[c]{\textbf{QQP} \\\textbf{Acc}}} &  {\makecell[c]{\textbf{RTE} \\\textbf{Acc}}} & {\makecell[c]{\textbf{STSB} \\\textbf{SpCorr}}} &  {\makecell[c]{\textbf{SST-2} \\\textbf{Acc}}} & \textbf{Average} \\
\midrule
BERT\textsubscript{base} & 84.9  & 84.9   & 91.2  & 91.7 & 88.4  &71.5  &88.3 & 93.8 & 86.8 \\
\midrule
KD\textsubscript{4} &   77.7 &   77.7 &   86.9 &   85.1 &   86.1 &   65.3 &   86.4 &   89.6 &   81.8 \\
\quad w/ \textsc{StarK} & \textbf{78.8} & \textbf{79.0} & \textbf{87.4} & \textbf{85.7}  & \textbf{86.5} & \textbf{67.5}  & \textbf{87.2} & \textbf{90.6} & \textbf{82.8} \\ 
PKD\textsubscript{4} &   77.7   &   77.7    &   87.6  &   85.0 &   86.0  &   65.3 &   86.4  &   89.9   &   82.0 \\
\quad w/ \textsc{StarK} & \textbf{78.8}    & \textbf{79.1}     & \textbf{87.7} & \textbf{85.9}  & \textbf{86.6} & \textbf{66.8}  & \textbf{87.2}   & \textbf{90.1}    & \textbf{82.8} \\
CKD\textsubscript{4} &   77.7 &   77.9  &   87.2  &   85.0 &   86.2  &   64.6 &   86.4  &   89.6 &   81.8 \\
\quad w/ \textsc{StarK} & \textbf{78.8}    & \textbf{79.0}     & \textbf{87.6} & \textbf{86.4}  & \textbf{86.5} & \textbf{66.4}  & \textbf{87.2}   & \textbf{90.4}    & \textbf{82.8} \\
\bottomrule
\end{tabular}}
\caption{The results of pluggability to baselines.}
\label{tab4}
\end{table*}

We also show \textsc{StarK} is pluggable to any distillation methods since it is orthogonal to existing paradigms. We hence plug \textsc{StarK} to our baselines KD, PKD, and CKD to distil a four-layer student from BERT\textsubscript{base}. As in Table~\ref{tab4}, we observe that \textsc{StarK} has universal pluggability to regarded baselines, averagely improving the absolute performance by 0.9\%.

\paragraph{Unstructured Pruning}

As aforementioned, \textsc{StarK} can be flexibly applied with unstructured pruning. For unstructured pruning, we derive the expressiveness and student-friendliness scores in the same way as that used in our structured \textsc{StarK}, except the recording variables are attached to parameters rather than modules like heads. The results in Table~\ref{tab5} verify that \textsc{StarK} with unstructured pruning is slightly worse that \textsc{StarK} with structured pruning, yet it still outperforms KD. Thus, \textsc{StarK} is capable of unstructured pruning.

\begin{table*}[ht]
\centering
\resizebox{0.97\textwidth}{!}{
\begin{tabular}{lccccccccc}
\toprule
\textbf{Method} & {\makecell[c]{\textbf{MNLI-m} \\\textbf{Acc}}} & {\makecell[c]{\textbf{MNLI-mm} \\\textbf{Acc}}} & {\makecell[c]{\textbf{MRPC} \\\textbf{F1}}} &{\makecell[c]{\textbf{QNLI} \\\textbf{Acc}}} &  {\makecell[c]{\textbf{QQP} \\\textbf{Acc}}} &  {\makecell[c]{\textbf{RTE} \\\textbf{Acc}}} & {\makecell[c]{\textbf{STSB} \\\textbf{SpCorr}}} &  {\makecell[c]{\textbf{SST-2} \\\textbf{Acc}}} & \textbf{Average} \\
\midrule
BERT\textsubscript{base} & 84.9  & 84.9   & 91.2  & 91.7 & 88.4  &71.5  &88.3 & 93.8 & 86.8 \\
\midrule
KD\textsubscript{4} &   77.7 &   77.7 &   86.9 &   85.1 &   86.1 &   65.3 &   86.4 &   89.6 &   81.8 \\
\textsc{StarK}\textsubscript{4} & 78.8 & \textbf{79.0} & \textbf{87.4} & \textbf{85.7}  & 86.5 & \textbf{67.5}  & 87.2 & \textbf{90.6} & \textbf{82.8} \\ 
\textsc{StarK}\textsubscript{4}\textsuperscript{*} & \textbf{79.0} & \textbf{79.0} &  \textbf{87.4} & 85.3  & \textbf{86.8} & 66.1  & \textbf{87.3} & 89.8 & 82.6 \\
\bottomrule
\end{tabular}}
\caption{The results of compatibility with unstructured pruning. \textsuperscript{*} indicates that unstructured pruning is otherwise used.}
\label{tab5}
\end{table*}

\paragraph{Automatic \textsc{StarK}}

\begin{table*}[ht]
\centering
\resizebox{0.99\textwidth}{!}{
\begin{tabular}{lccccccccc}
\toprule
\textbf{Method} & {\makecell[c]{\textbf{MNLI-m} \\\textbf{Acc}}} & {\makecell[c]{\textbf{MNLI-mm} \\\textbf{Acc}}} & {\makecell[c]{\textbf{MRPC} \\\textbf{F1}}} &{\makecell[c]{\textbf{QNLI} \\\textbf{Acc}}} &  {\makecell[c]{\textbf{QQP} \\\textbf{Acc}}} &  {\makecell[c]{\textbf{RTE} \\\textbf{Acc}}} & {\makecell[c]{\textbf{STSB} \\\textbf{SpCorr}}} &  {\makecell[c]{\textbf{SST-2} \\\textbf{Acc}}} & \textbf{Average} \\
\midrule
\textsc{StarK}\textsubscript{4} & \textbf{78.8} & \textbf{79.0} & \textbf{87.4} & \textbf{85.7}  & \textbf{86.5} & \textbf{67.5} & \textbf{87.2} & \textbf{90.6} & \textbf{82.8} \\
\quad \S & 40\% & 50\% & 50\% & 50\% & 30\% & 60\% & 40\% & 50\%  & 46\% \\
\textsc{StarK-Auto}\textsubscript{4} & 78.1 & 79.0 &  86.6 & 85.7  & 86.0 & 67.5  & 87.2 & 90.0 & 82.6 \\ 
\quad \S & 47\% & 51\% & 35\%  & 46\% & 44\% & 42\% & 38\% & 38\% & 43\% \\
\bottomrule
\end{tabular}}
\caption{The results of \textsc{StarK-Auto}.}
\label{tab6}
\end{table*}

An issue with \textsc{StarK} is that the optimal sparsity is obtained by exhaustively enumerating all candidate sparsity levels, leading to some level of training-inefficiency. To address it, we explore an alternative algorithm to get the optimal sparsity so that \textsc{StarK} is enabled with a pursued automatic property. To this end, an attentive solution is proposed based on a surprising observation that a sparse teacher under the guidance of randomness (denoted as \textsc{StarK-Rand}\textsubscript{4}) can achieve a promising Average score of 82.5\%, whereas the scores for KD\textsubscript{4} and \textsc{StarK}\textsubscript{4} are correspondingly 81.8\% and 82.8\%. This weird phenomenon drives us to put forward a proposition.
\begin{assumption}
Both expressiveness and student-friendliness scores are densely located at their clusters, where the cluster center of student-friendliness scores owns a smaller magnitude than that of expressiveness scores.
\label{assumption1}
\end{assumption}

\begin{figure}[ht]
\centering
\includegraphics[width=0.43\textwidth]{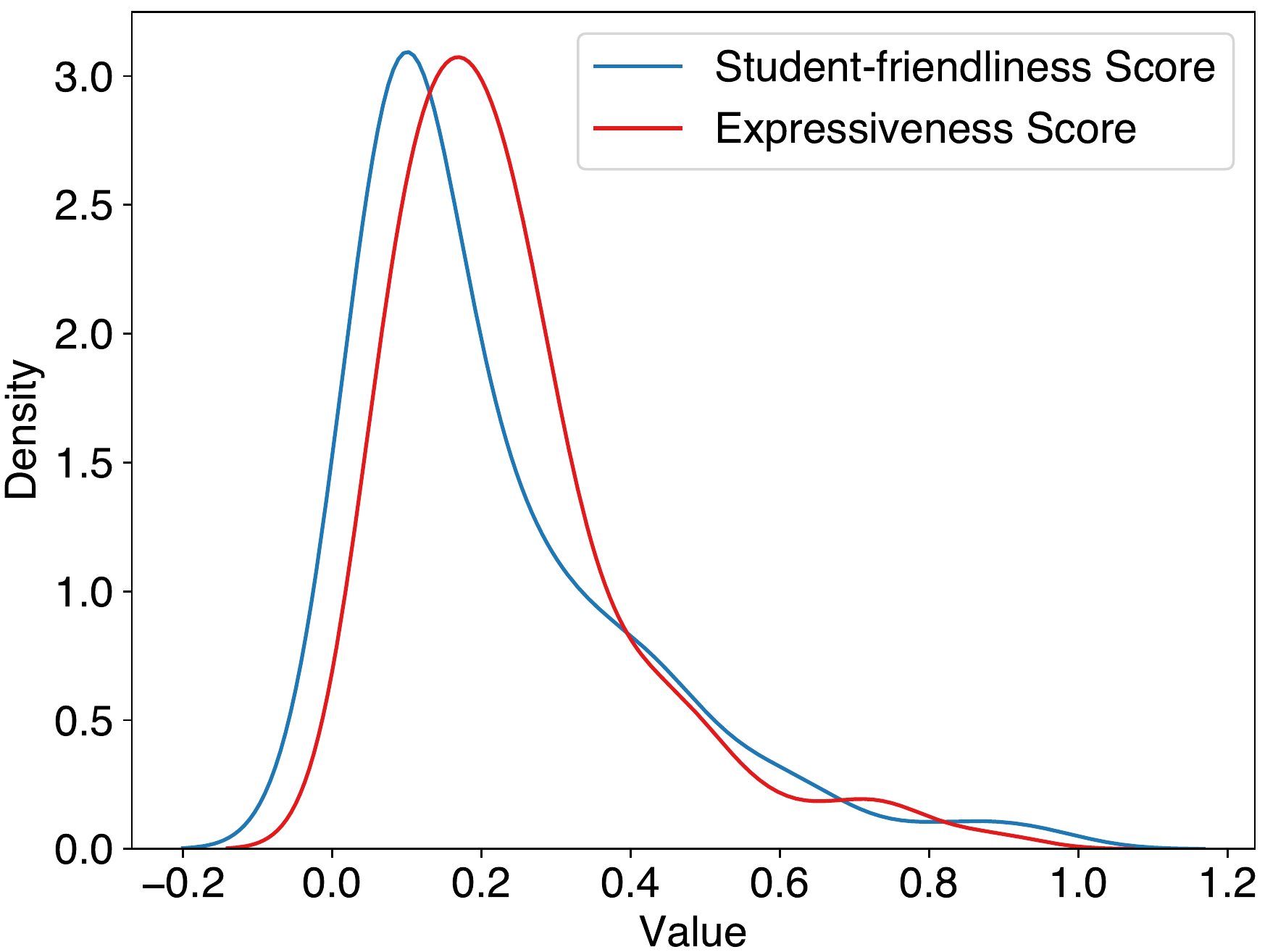}
\caption{Density of expressiveness and student-friendliness scores of BERT\textsubscript{base} attention heads finetuned on MRPC~\citep{DolanB05}. Intermediate neurons share similar characteristics, which are supplied in Appendix~\ref{appendix2}.}
\label{fig4}
\end{figure}

The assumption is intuitively verified in Figure~\ref{fig4}. When random pruning is conducted, firstly the probability of sparsifying a student-unfriendly parameter is high, and secondly the joint probability of sparsifying a student-unfriendly and inexpressive parameter is higher than that of sparsifying a student-unfriendly yet expressive parameter. Therefore, the performance of \textsc{StarK-Rand} is guaranteed with certain  probability by Assumption~\ref{assumption1}. However, we argue \textsc{StarK} is always a more robust choice than \textsc{StarK-Rand}.

\begin{figure}[ht]
\centering
\includegraphics[width=0.43\textwidth]{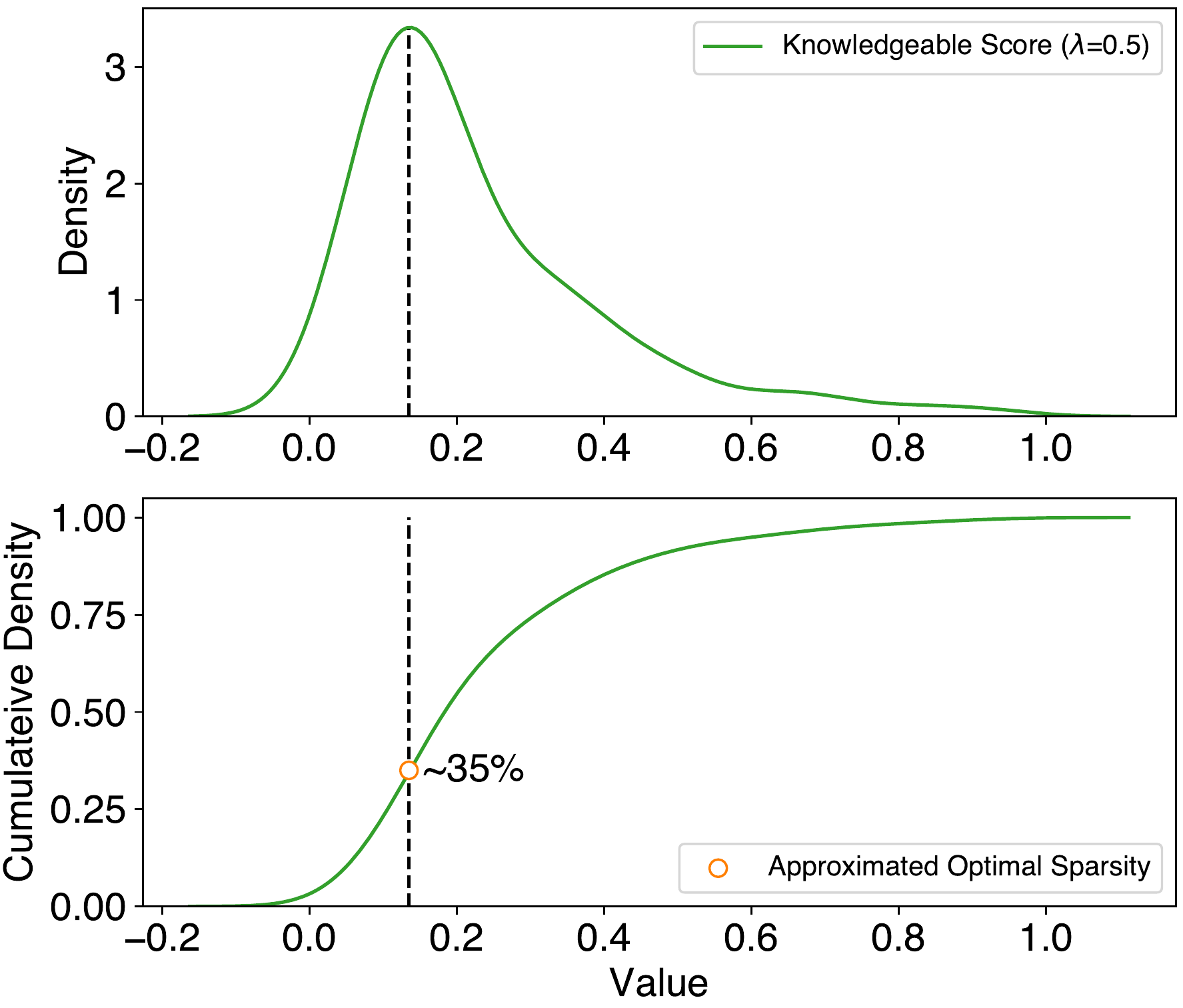}
\caption{Density and cumulative density of knowledgeableness scores of BERT\textsubscript{base} attention heads finetuned on MRPC~\citep{DolanB05}. Intermediate neurons share similar characteristics, which are supplied in Appendix~\ref{appendix2}. Full results on all tasks are supplied in Appendix~\ref{appendix3}.}
\label{fig5}
\end{figure}

A follow-up observation is that \textsc{StarK-Rand}\textsubscript{4} expects a smaller average sparsity (25\%) than \textsc{StarK}\textsubscript{4} does (46\%). It is easy to understand that the more parameters are sparsified, the lower probability above the performance guarantee will hold, though. The evident phenomenon inspires us to make another assumption.
\begin{assumption}
An optimal sparsity is positively correlated to the first density peak of a sparsification sequence.
\label{assumption2}
\end{assumption}
The assumption is illustrated in Figure~\ref{fig5}. Since \textsc{StarK-Rand} sparsifies parameters at random, it will have a small optimal sparsity as a consequence of meeting the first density peak very early. For \textsc{StarK} enjoys a sparsification sequence with only one density peak, its optimal sparsity can be automatically estimated (denoted as \textsc{StarK-Auto}) through Assumption~\ref{assumption2}. Experimental results can be found in Table~\ref{tab6}, where \textsc{StarK-Auto} approximates \textsc{StarK} in term of the Average metric. Nevertheless, we argue it is the last to use \textsc{StarK-Auto} otherwise for an extremely low practical compute as the performance can suffer a subtle drop.

\section{Conclusions}

In this paper, we validate that sparse teachers can be dense with knowledge under the guidance of our designed knowledgeable score. The idea of the sparse teacher is motivated from a pilot study, and the knowledgeable score is carefully crafted to make sure that the student-unfriendly knowledge can be reduced without hurting too much the expressive knowledge. Extensive experimental results on the GLUE benchmark support our claim to a large degree.

\section*{Limitations}
\textsc{StarK} can be further explored under two additional settings: 1) in a task-agnostic setting (e.g., MiniLM) and 2) on large LMs (e.g., BERT\textsubscript{large}). Moreover, our attentive automatic solution for \textsc{StarK} can be enhanced so that its performance can at least match the original performance.

\section*{Acknowledgements}

We thank the anonymous reviewers and chairs for their constructive suggestions. This research was supported in part by Natural Science Foundation of Beijing (grant number:  4222036) and Huawei Technologies (grant number: TC20201228005).

\bibliography{custom}
\bibliographystyle{acl_natbib}

\appendix

\section{Comparative Equivalence of Distribution Variance and Negative Entropy}
\label{appendix1}

\begin{theorem}
For any two distributions $\mathbf{y}$ and $\mathbf{y}^{\prime}$, the negative entropy difference between them can be approximated by their variance difference.
\end{theorem}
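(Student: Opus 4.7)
The plan is to Taylor-expand the function $f(y) = y\log y$ around the point $y = 1/n$ (the value each coordinate takes under the uniform distribution over $n$ classes) and observe that, after summing over coordinates, the first-order term vanishes by the simplex constraint, leaving the second-order term which is precisely the coordinate variance. Taking differences between two distributions will then cancel the constant term and leave the desired approximate proportionality.

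Concretely, first I would write the negative entropy as $-H(\mathbf{y}) = \sum_{i=1}^{n} y_i \log y_i$ and set $f(y) = y\log y$, which satisfies $f'(y) = \log y + 1$ and $f''(y) = 1/y$. Expanding around $y = 1/n$ gives
\begin{equation}\nonumber
f(y_i) = -\tfrac{1}{n}\log n + (1 - \log n)(y_i - \tfrac{1}{n}) + \tfrac{n}{2}(y_i - \tfrac{1}{n})^2 + R_i,
\end{equation}
where $R_i$ collects the cubic and higher remainders. Summing over $i$ and using $\sum_i y_i = 1$ (so that $\sum_i(y_i - 1/n) = 0$), the linear term disappears and we obtain
\begin{equation}\nonumber
-H(\mathbf{y}) = -\log n + \tfrac{n}{2}\,\mathrm{Var}(\mathbf{y}) + \sum_i R_i,
\end{equation}
where $\mathrm{Var}(\mathbf{y}) = \sum_i (y_i - 1/n)^2$ is the (unnormalized) coordinate variance.

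Next I would apply the same expansion to $\mathbf{y}^{\prime}$ and subtract: the constant $-\log n$ cancels, and the leading surviving term is
\begin{equation}\nonumber
-H(\mathbf{y}) + H(\mathbf{y}^{\prime}) \approx \tfrac{n}{2}\bigl(\mathrm{Var}(\mathbf{y}) - \mathrm{Var}(\mathbf{y}^{\prime})\bigr),
\end{equation}
which is exactly the claimed approximation. Since the multiplicative constant $n/2$ is strictly positive, the sign of the variance gap coincides with the sign of the negative-entropy gap, which justifies the \emph{comparative} equivalence used in the pilot study (rankings by variance and by negative entropy agree to leading order).

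The main obstacle is controlling the remainder $\sum_i R_i$ to justify calling the estimate an ``approximation.'' The cleanest route is to invoke Taylor's theorem with the Lagrange form of the remainder on $f$, noting that on the open simplex $f''' = -1/y^2$ is bounded away from $-\infty$ on any compact subset bounded away from $y = 0$, so the remainder is $O(\|\mathbf{y} - \mathbf{u}\|_3^3)$ where $\mathbf{u}$ is uniform; this is of strictly lower order than the quadratic term whenever the distributions are not too peaked. For distributions that place mass arbitrarily close to $0$ on some coordinate the expansion degrades, and one would have to either restrict to the regime relevant to the pilot study (where the teacher/student logits yield well-spread softmax outputs after temperature smoothing) or swap the pointwise bound for a qualitative monotonicity argument.
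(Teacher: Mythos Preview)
Your argument is correct and follows the same overall strategy as the paper---a second-order Taylor expansion of $y\log y$, summed over coordinates with the simplex constraint---but you choose a different expansion point. The paper expands about $y=1$ (where $f(1)=0$, $f'(1)=1$, $f''(1)=1$), obtaining $y_i\log y_i \approx (y_i-1)+\tfrac12(y_i-1)^2$, then subtracts the two distributions, uses $\sum_i y_i=\sum_i y_i'=1$ to kill the linear pieces, and rewrites $\sum_i y_i^2 - \sum_i y_i'^2$ as the difference of (unnormalized) variances about the common mean $\bar{\mathbf{y}}=1/n$. You instead expand about $y=1/n$, which yields the coefficient $n/2$ rather than $1/2$ in front of the variance gap; since the theorem only asserts an approximate proportionality (and the corollary only a comparative/ordinal equivalence), either constant suffices. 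Your choice of base point is arguably the more natural one: probability-vector entries cluster near $1/n$ rather than near $1$, so the cubic remainder you isolate is typically much smaller, and your explicit discussion of when the remainder can be controlled (bounded away from a vertex, temperature-smoothed logits) is a genuine improvement over the paper, which simply drops $\mathbf{r}$ without comment.
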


\begin{proof}
\begin{equation}\nonumber
\begin{aligned}
&-\mathcal{H}(\mathbf{y})-(-\mathcal{H}(\mathbf{y}^{\prime}))\\
&=\sum_{i}\mathbf{y}_{i}\log\mathbf{y}_{i}-\sum_{i}\mathbf{y}^{\prime}_{i}\log\mathbf{y}^{\prime}_{i}\\
&\approx\sum_{i}(\mathbf{y}_{i}-1)+\frac{1}{2}(\mathbf{y}_{i}-1)^{2}+\mathbf{r}\\
&\qquad-\sum_{i}(\mathbf{y}_{i}^{\prime}-1)+\frac{1}{2}(\mathbf{y}_{i}^{\prime}-1)^{2}+\mathbf{r}\\
&=\sum_{i}-(\mathbf{y}_{i}-\mathbf{y}_{i}^{\prime})+\frac{1}{2}(\mathbf{y}_{i}^{2}-\mathbf{y}_{i}^{\prime 2})\\
&=\sum_{i}\frac{1}{2}((\mathbf{y}_{i}-\bar{\mathbf{y}})^{2}-(\mathbf{y}_{i}^{\prime}-\bar{\mathbf{y}}^{\prime})^{2})\\
&\propto\sum_{i}(\mathbf{y}_{i}-\bar{\mathbf{y}})^{2}-\sum_{i}(\mathbf{y}_{i}^{\prime}-\bar{\mathbf{y}}^{\prime})^{2}\\
&=\mathcal{V}(\mathbf{y})-\mathcal{V}(\mathbf{y}^{\prime}).
\end{aligned}
\end{equation}
\end{proof}

\begin{corollary}
Distribution variance, when taken as the measure of confidence, is comparatively equivalent to distribution negative entropy.
\end{corollary}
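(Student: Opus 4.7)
The plan is to derive the Corollary as a near-immediate consequence of Theorem 1, which has already been established. The intuition is that ``comparative equivalence'' of two confidence measures should mean that they induce the same ordering on distributions: if measure $A$ says distribution $\mathbf{y}$ is more confident than $\mathbf{y}^{\prime}$, then so does measure $B$. Since Theorem 1 tells us that the difference in negative entropy between any two distributions is, to leading order in a Taylor expansion, a positive multiple of the difference in their variances, the two measures must agree on sign and hence on the induced comparative ordering.

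Concretely, I would proceed in three short steps. First, fix the notion of ``comparative equivalence'' explicitly: two real-valued functionals $F$ and $G$ on distributions are comparatively equivalent as confidence measures if, for every pair $\mathbf{y}, \mathbf{y}^{\prime}$, the sign of $F(\mathbf{y}) - F(\mathbf{y}^{\prime})$ matches the sign of $G(\mathbf{y}) - G(\mathbf{y}^{\prime})$. Second, apply Theorem 1, which yields
\begin{equation}\nonumber
-\mathcal{H}(\mathbf{y}) - (-\mathcal{H}(\mathbf{y}^{\prime})) \;\propto\; \mathcal{V}(\mathbf{y}) - \mathcal{V}(\mathbf{y}^{\prime}),
\end{equation}
with a positive proportionality constant $\tfrac{1}{2}$ arising from the second-order Taylor term. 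Third, conclude that the sign of the left-hand side equals the sign of the right-hand side, so negative entropy and variance rank distributions identically and are therefore comparatively equivalent as confidence measures.

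The main obstacle, and the only real subtlety, is handling the Taylor remainder $\mathbf{r}$ that was dropped in the proof of Theorem 1. To make the ordering preservation rigorous one would need to argue either that (i) the higher-order residuals are uniformly dominated by the second-order term in the regime of interest (e.g. distributions close to uniform, which is where confidence is meaningfully measured), or (ii) one is content with an approximate equivalence that holds up to this remainder, as the paper's informal phrase ``comparatively equivalent'' suggests. I would adopt the second, weaker reading consistent with the paper's usage, and state the corollary as a direct consequence of the proportionality already established, without re-deriving the expansion.
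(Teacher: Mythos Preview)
Your proposal is correct and matches the paper's approach: the paper provides no separate proof for the Corollary at all, simply stating it immediately after the proof of Theorem~1 as a direct consequence of the proportionality $-\mathcal{H}(\mathbf{y}) - (-\mathcal{H}(\mathbf{y}^{\prime})) \propto \mathcal{V}(\mathbf{y}) - \mathcal{V}(\mathbf{y}^{\prime})$ established there. Your three-step argument and your handling of the Taylor remainder are, if anything, more careful than what the paper offers.
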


\section{Density of Scores of BERT\textsubscript{base} Intermediate Neurons on MRPC}
\label{appendix2}

\begin{figure}[ht]
\centering
\includegraphics[width=0.43\textwidth]{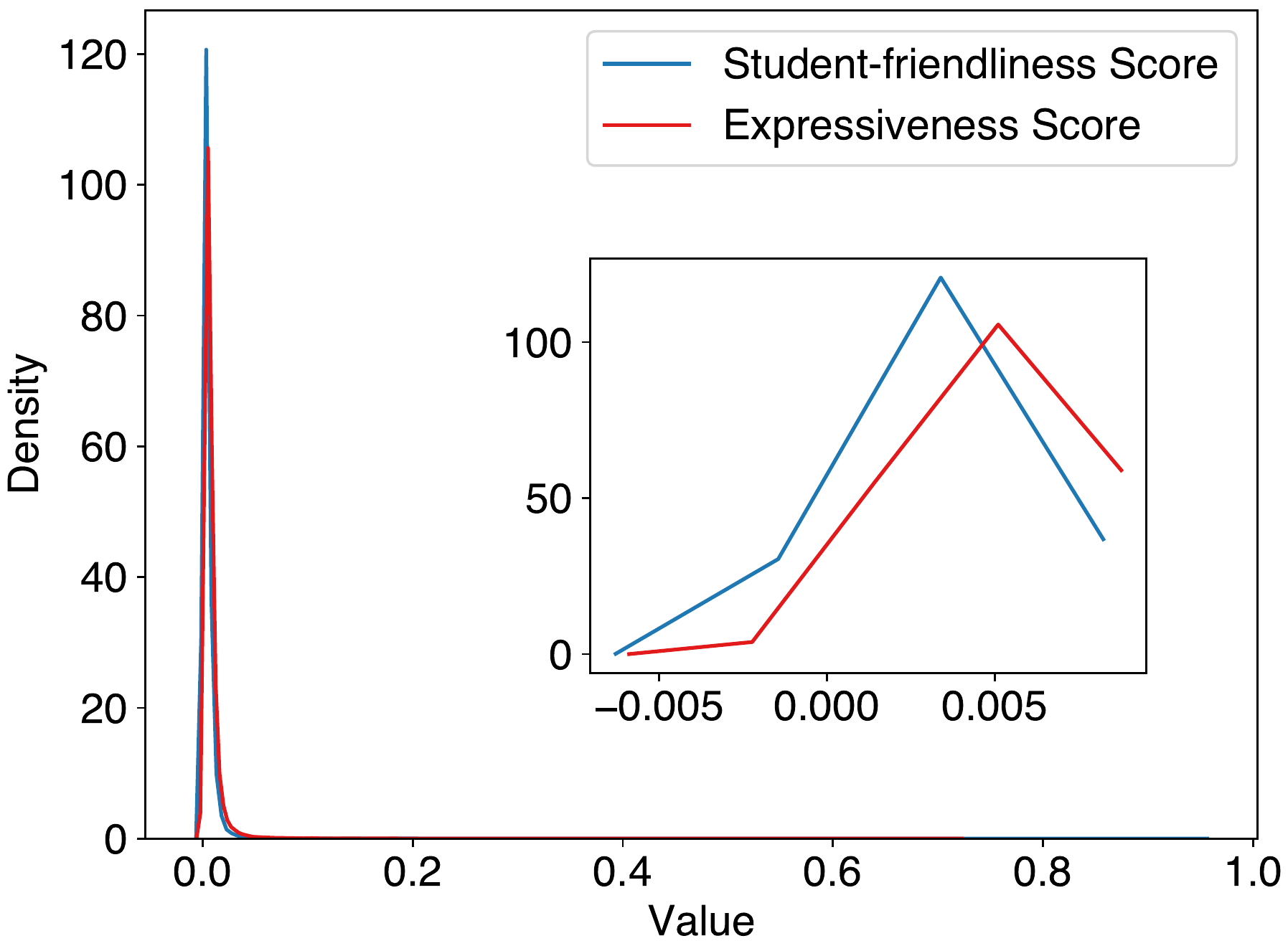}
\caption{Density of expressiveness and student-friendliness scores of BERT\textsubscript{base} intermediate neurons finetuned on MRPC~\citep{DolanB05}. }
\label{fig6}
\end{figure}

\begin{figure}[ht]
\centering
\includegraphics[width=0.43\textwidth]{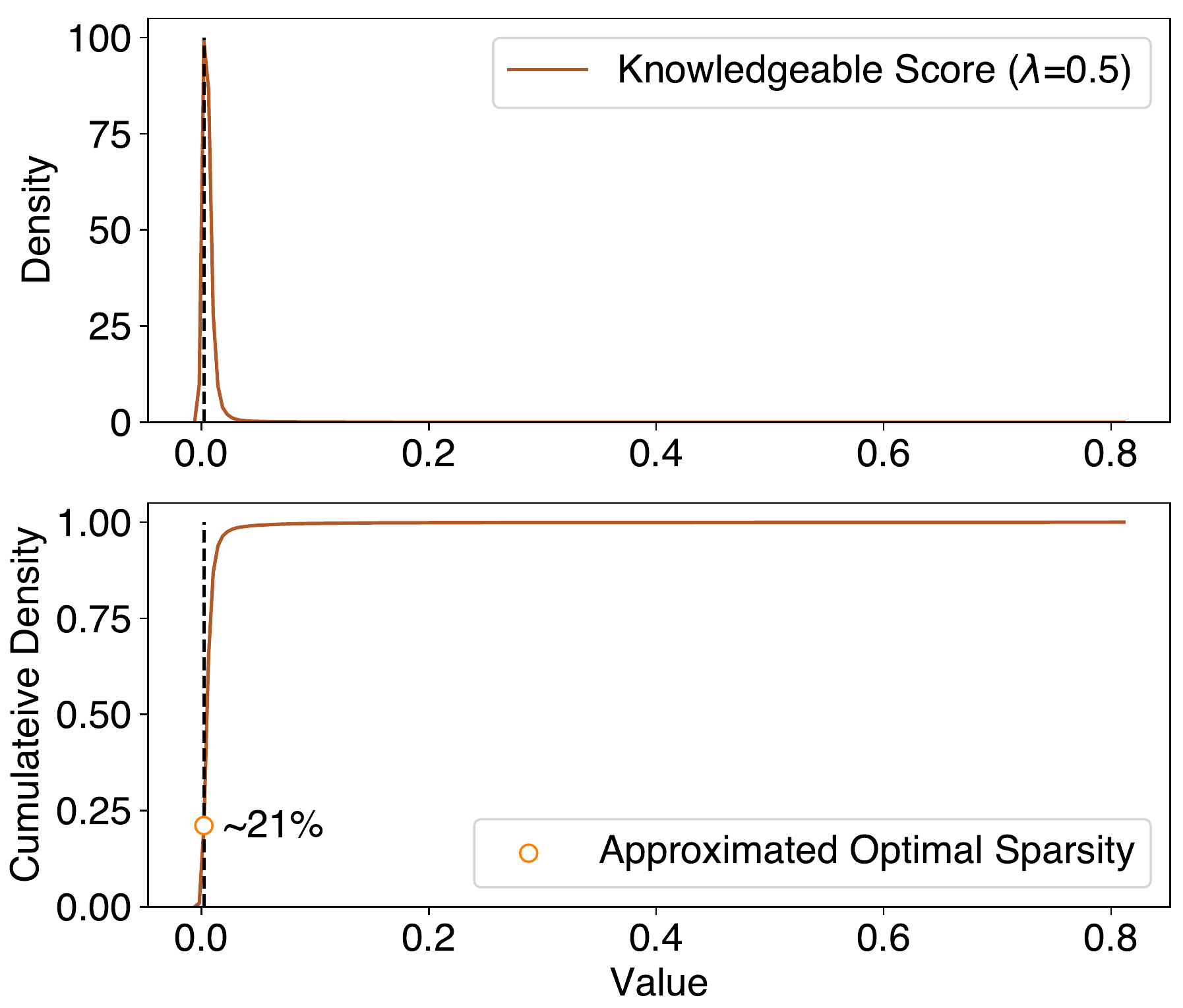}
\caption{Density and cumulative density of knowledgeableness scores of BERT\textsubscript{base} intermediate neurons finetuned on MRPC~\citep{DolanB05}.}
\label{fig7}
\end{figure}

\section{Density of Scores of BERT\textsubscript{base} Attention Heads on GLUE}
\label{appendix3}

\begin{figure*}[ht]
\centering
\includegraphics[width=0.82\textwidth]{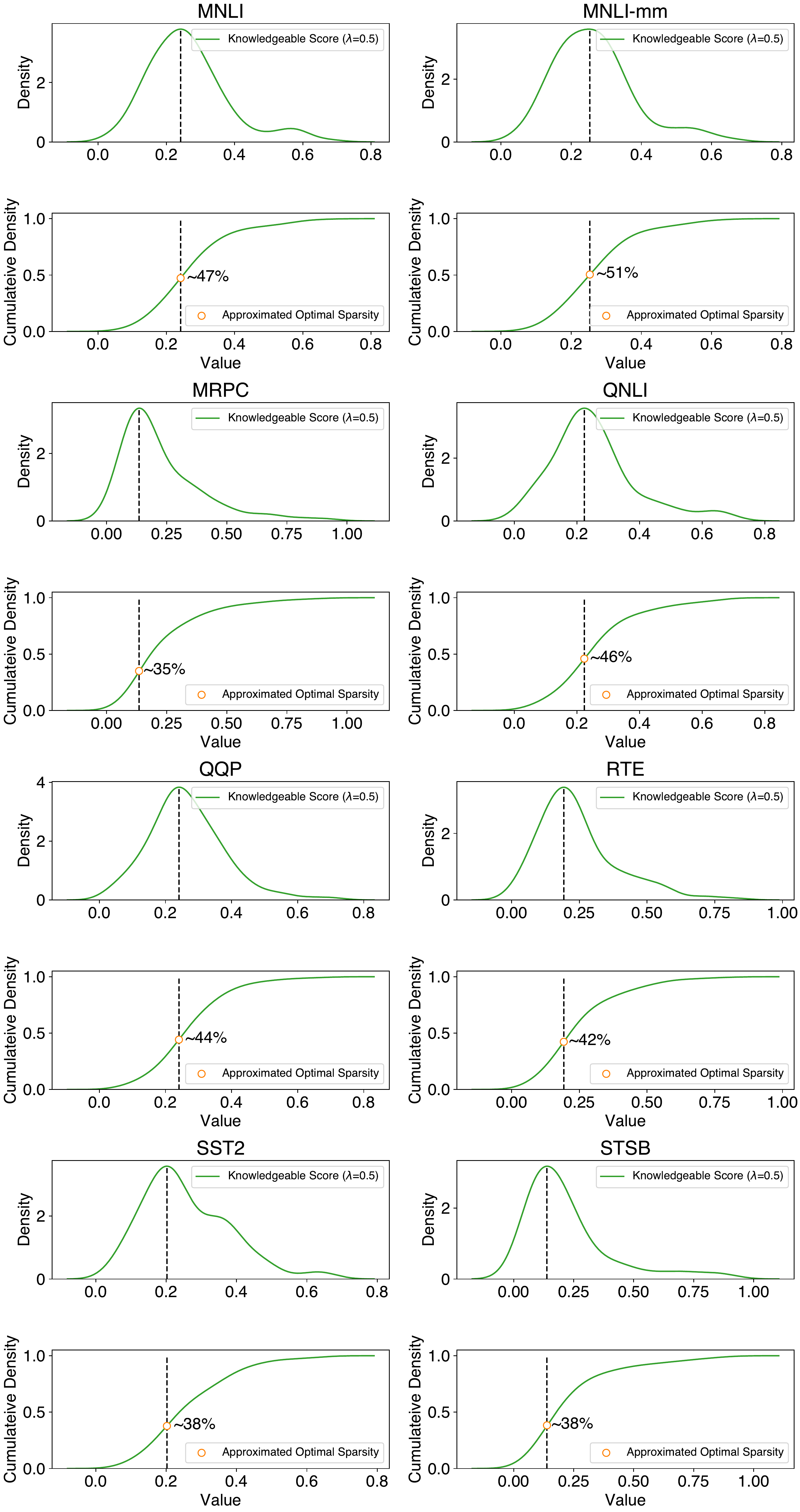}
\caption{Density and cumulative density of knowledgeableness scores of BERT\textsubscript{base} attention heads finetuned on GLUE.}
\label{fig8}
\end{figure*}

\end{document}